\newcommand{\av}[2][\,]{\mathbb{E}_{#1}\!\left[ {\strut #2} \right]}
\newcommand{\e}[1]{\mathrm{e}^{{\, #1}}}
\newcommand{\dd}{\mathrm{d}}
\newcommand{\ii}{\mathrm{i}}
\newcommand{\Prob}[1]{\mathbb{P}\left( \strut {#1} \right)}
\newcommand{\pred}[1]{\left\llbracket \strut #1 \right\rrbracket}
\newcommand{\logg}[1]{\ln\!\left( \strut {#1} \right)}
\newcommand{\erm}{{\text{\tiny ERM}}}
\renewcommand{\mid}{\hspace{1pt}\vert\hspace{1pt}}
\newcommand{\brisk}{$\beta$-Risk\xspace}
\newcommand{\vc}{{\textit{Vapnik--Chervonenkis}}}
\newcommand{\tr}{\textsf{T}}
\DeclareMathAlphabet{\mat}{OT1}{cmss}{bx}{n}
\newcommand{\gp}{$\gamma$-Precision\xspace}
\newcommand{\sgn}{\mathop{\mathrm{sgn}}}
\newcommand{\smin}{{\text{\tiny min}}}
\begin{document}

\title{Rethinking Generalisation}

\author{\name Antonia Marcu \email am1g15@soton.ac.uk \\
       \addr Vision, Learning and Control\\
       University of Southampton\\
       Southampton, UK
       \AND
       \name Adam Pr\"ugel-Bennett \email apb@ecs.soton.ac.uk \\
       \addr Vision, Learning and Control\\
       University of Southampton\\
       Southampton, UK}

\editor{Kevin Murphy and Bernhard Sch{\"o}lkopf}

\maketitle

\begin{abstract}
In this paper, a new approach to computing the generalisation performance is presented that assumes the distribution of risks, $\rho(r)$, for a learning scenario is known.
From this, the expected error of a learning machine using empirical risk minimisation is computed for both classification and regression problems.  A critical quantity in determining the generalisation performance is the power-law behaviour of $\rho(r)$ around its minimum value---a quantity we call \textit{attunement}.  The distribution 
$\rho(r)$ is computed for the case of all Boolean functions and for the perceptron used in two different problem settings.  Initially a simplified analysis is presented where an independence assumption about the losses is made.  A more accurate analysis is carried out taking into account chance correlations in the training set.  This leads to corrections in the typical behaviour that is observed.
\end{abstract}

\begin{keywords}
Generalisation, Learning Theory
\end{keywords}

\section{Introduction}
\label{sec:intro}

In traditional statistical learning theory the role of the learning algorithm is to eliminate all rules that poorly explain the data. This process relies on the idea that rules with poor generalisation performance (high risk) will, with high probability, make errors on a sufficiently large randomly chosen training data set \citep{vc, pac, NIPS1988_154,  blumer1989learnability, haussler1992decision, vapnik1992principles}.  Thus, given a large enough training set, the only rules left are those with good generalisation behaviour.  To formalise this, we assume there is a set, $\mathcal{H}$, of rules or hypotheses (we use the terms interchangeably).  We consider a learning problem where the training data is drawn independently from some underlying distribution and we minimise some loss function associated with each training example and each rule.  The expected loss for a hypothesis, $h$, over this distribution of data we term the risk, $R_h$.  Thus, our objective is to choose a hypothesis with low risk.  For a given training set we can compute the total loss, $L_h$, for all data points for a particular hypothesis, $h$. This loss associated with the training set is known as the \textit{empirical risk}.  We assume that we have an algorithm capable of choosing a hypothesis from the set
\begin{equation}
    \mathcal{H}_\erm = \{h\in\mathcal{H} | \forall_{h'} \; L_h \leq L_{h'}\},
\end{equation}
i.e. the set of hypotheses with minimum loss on the training set.  This is known as \textit{empirical risk minimisation}.  In traditional statistical learning theory the aim is to find a bound on the number of training examples such that with overwhelming probabilities all $h\in\mathcal{H}_\erm$ will have a risk less than some $\epsilon$.  To obtain such a bound there needs to be a finite number of hypotheses, otherwise, there could still be a high-risk hypothesis which by chance did well on the particular training set we used.  In the case where the learning machine has a continuous parameter space (so that the dimensionality of the space is uncountably infinite), we consider the effective size of the hypothesis space to be the \vc{} (VC) dimension \citep{vc}.  This effective size or capacity lies at the heart of conventional statistical learning theory.  By limiting the capacity we can obtain stronger bounds on the generalisation performance.

In this paper, we challenge this traditional approach.  Eliminating all high risk hypotheses is, in our view, too stringent and often leads to weak bounds. It is an unnecessary condition as good generalisation can be achieved with high probability so long as the  vast majority of hypotheses in $\mathcal{H}_\erm$ have low risk.  Thus, provided there is no bias towards choosing high-risk machines we will still, with high probability, chose a low-risk machine.  In this scenario, the capacity plays a much more minor role, instead, we need to know the distribution of risks, $\rho(r)$, for a learning scenario. That is, we need to know the proportion of hypotheses with a certain risk.  As we will show, the asymptotic generalisation performance is determined by the power-law growth in $\rho(r)$ for small $r$; a quantity we term attunement.

 This new perspective solves an apparent paradox first pointed out in an influential paper by \cite{zhang2016understanding}.  They studied some of the most successful deep learning networks, such as AlexNet~\citep{krizhevsky2012imagenet} and the Inception network~\citep{szegedy2015going}. They conducted empirical experiments on CIFAR-10 \citep{cifar}, a 10-way image classification task consisting of 50\,000 training images, and on ImageNet~\citep{imagenet_cvpr09}, a 1000-way classification task with over one million training images.  In their experiments rather than provide the correct labels for the training examples they trained the network with randomly shuffled labels.  Interestingly they were still able to find a set of parameters that for CIFAR-10 perfectly classified all the training examples, while for ImageNet they found network instances with very low errors on the training examples. This shows that these machines have a huge capacity and suggests that even for these very large training sets there still exist sets of parameters that will have high risk, but low training errors.  These networks have such a large capacity that conventional statistical learning theory can provide no useful guarantee of generalisation performance.  Nevertheless, when trained on real data, these networks achieved state-of-the-art results at the time they were first introduced.  In our approach, this provides no contradiction.  If we consider the set of parameters that perform well on the training set, then an overwhelming proportion of those parameters corresponds to low risk hypotheses.  Indeed in \cite{zhang2016understanding} they found that it took longer to train a network with random labels suggesting that they had to search much more of the parameter space to find a machine with a small training error (but effectively no generalisation ability).

The basic idea of our approach is simple.  We assume that we are given a set of hypotheses with a given distribution of risks, $\rho(r)$.  We eliminate hypotheses that perform poorly on the training examples.  This will, with overwhelming probability, remove more hypotheses with high risk, thus the expected risk of hypotheses in $\mathcal{H}_\erm$ will decrease as the size of the training set is increased.  Although the idea is simple, computing the expected ERM risk \textit{exactly} from $\rho(r)$ alone cannot be done.  We would require information about the correlation between hypotheses that depends on other details of our learning algorithm.  However, by making an assumption about the independence of losses for the hypotheses we can obtain an approximation to the expected ERM risk from $\rho(r)$ alone.  We do this for the case of classification and regression in Sections~\ref{sec:framework} and~\ref{sec:regression} respectively.  In Section~\ref{sec:rho} we compute $\rho(r)$ for the case of all Boolean functions and for the perceptron (for two different data distributions). Section~\ref{sec:assumptions} derives an exact expression for the expected risk in the realisable case---this result is data set dependent.  In Section~\ref{sec:dataset}, we study this data set dependence and obtain a more accurate approximation for the generalisation performance.  We discuss the results in the Conclusions.  Technical aspects we leave for the appendices.  In Appendix~\ref{sec:appPAC} we derive a PAC-like bound.  Finally in Appendix~\ref{sec:appAsym} we show that the asymptotic behaviour is dominated by the power-law behaviour of $\rho(r)$ for small $r$.

The work we present is closely related to work in the statistical mechanics literature on learning~\citep{engel2001statistical}.  This developed out of a seminal paper by Elizabeth Gardner which calculates the expected generalisation performance for a perceptron~\citep{Gardner_1988}.  That paper is technically challenging using replica methods but is believed to be exact in the limit when the number of features becomes infinite.  In this paper we attempt to develop a more general framework, although we are forced to use approximations.  The approximation developed in the next section is equivalent to the \textit{annealed approximation} in statistical mechanics.  In Section~\ref{sec:dataset} the corrections we obtain give very similar asymptotic generalisation results for the perceptron as those obtained by the replica calculation.  A similar approach to ours has also been put forward by \cite{scheffer1999expected}, however, rather than introducing a new framework for reasoning about generalisation, they used this approach to propose a model selection algorithm. For a number of classes of hypotheses, they estimate empirically the distribution of error rates from which the expected error of an ERM hypothesis from each class can be obtained. Although for their calculation \cite{scheffer1999expected} assumed independence of the losses, the implications of this assumption were not further investigated.

\section{Framework}
\label{sec:framework}

We seek to obtain stronger bounds than those obtained by considering the capacity of a learning machine.  To do so we require more information about the learning problem than in conventional learning theory.  In particular we assume that we are given a problem with a fixed dataset for which we know the distribution of risks, $\rho(r)$.  In Section~\ref{sec:rho}, we look at how to compute $\rho(r)$ for some specific problems. In general this is a hard task, nevertheless, by examining how the generalisation performance depends on $\rho(r)$, we can obtain a better understanding of what is required to improve generalisation.

Following conventional theory, we imagine that we are given a training data set of size $m$, where each training example is drawn at random and independently from the distribution of data that defines our problem.  We model our learning machine by a set, $\mathcal{H}$ of hypotheses. In our formalism the set of hypotheses may be finite or infinite. Each hypothesis, $h\in\mathcal{H}$, will have a loss associated with it.  In this section, we consider classification problems where we take the loss function to be 1 if we make a misclassification and 0 otherwise.  As each training data point is sampled independently, for any hypothesis, $h$, with risk $R_h$, the loss, $L_h$, will be binomially distributed
\begin{align*}
  \Prob{L_h=\ell|R_h} = \mathrm{Binom}(\ell|m,R_h) = \binom{m}{\ell}
  \, R_h^\ell \, (1-R_h)^{m-\ell}.
\end{align*}
However, the hypotheses may be correlated.  In this and the next section we ignore this correlation.  This makes the analysis relatively straightforward.  This approximation is equivalent to the annealed approximation in statistical mechanics~\citep{engel2001statistical}. In Sections~\ref{sec:assumptions} and~\ref{sec:dataset} we analyse the realisable case (where, at least one hypothesis correctly classifies all training examples) and show that chance correlations between training examples lead to a systematic correction in the typically observed generalisation performance.  There we show that the `annealed approximation' gives a reasonable qualitative description of the generalisation performances but is overly pessimistic.

As mentioned above, we consider the scenario known as \textit{empirical risk minimisation} where we choose a hypothesis from $\mathcal{H}_\erm$.  Importantly, we assume that every hypothesis in $\mathcal{H}_\erm$ is equally likely to be chosen.  This is sometimes referred to as \textit{Gibb's learning}.  As we discuss in Section~\ref{sec:dataset}, in the context of training a perceptron, Gibb's learning provides a good approximation to the performance of the perceptron learning algorithm.

Let $R_\erm\in\{R_h|h\in\mathcal{H}_\erm\}$  denote the risk of a randomly sampled hypothesis from the set of hypotheses with minimum loss on the training set.
Under the assumptions of our framework, the expected ERM risk is
\begin{align*}
  \av{R_\erm}
  &= \frac{\sum_{h\in\mathcal{H}} R_h \pred{h\in\mathcal{H}_\erm}}
  {\sum_{h\in\mathcal{H}} \pred{h\in\mathcal{H}_\erm}}
  = \sum_{\ell=0}^m \frac{\sum_{h\in\mathcal{H}} R_h \pred{L_h=\ell}}
    {\sum_{h\in\mathcal{H}} \pred{L_h=\ell}}\, \pred{\ell=L_\erm} \\
  &= \sum_{\ell=0}^m \av{r|\ell} \, \pred{\ell=L_\erm}.
\end{align*}
where $\pred{\text{predicate}}$ denotes an indicator function equal to 1 if the predicate is satisfied and 0 otherwise, and $L_\erm = \min \{L_h|h\in\mathcal{H}\}$ (i.e. the minimum empirical risk). 
Making the strong assumption that $\av{r|\ell} \approx \av[\mathcal{D}]{\av{r|\ell}}$ (i.e.{} that there are no significant fluctuations between data sets) then
\begin{align*}
  \av[\mathcal{D}]{\av{R_\erm}}
  \approx  \sum_{\ell=0}^m \av[\mathcal{D}]{\av{r|\ell}} \,
  \Prob{\ell=L_\erm}
\end{align*}
In the rest of this section we write $\av{\cdots} = \av[\mathcal{D}]{\av{\cdots}}$ (i.e. the expectation both with respect to the data set and over all hypotheses in $\mathcal{H}_\erm$).  Computing $\Prob{\ell=L_\erm}$ is inherently problematic in this formalism as it depends on the correlations between our hypotheses.  In the spirit of the approximation we are making we could ignore any correlation between hypotheses.  If we do this and define $f_L(\ell)=\Prob{L_h=\ell}$ for a random hypothesis $h\in\mathcal{H}$, and let $F_L(\ell)=\Prob{L_h\leq\ell}=\sum_{\ell'=0}^\ell f_L(\ell')$, then
\begin{align*}
  \Prob{\ell = L_\erm} & = \prod_{h \in \mathcal{H}} \Prob{L_{h}\geq\ell}
  - \prod_{h \in \mathcal{H}} \Prob{L_{h}\geq \ell+1}\\
  &= (1-F_L(\ell-1))^H - (1-F_L(\ell))^{H}
\end{align*}
where $H=|\mathcal{H}|$, that is, the size of the hypothesis space.  For an infinite hypothesis space, we should take $H$ to be some effective size of the hypothesis space (e.g. the $2^{D_{VC}}$, where $D_{VC}$ is the VC-dimension).  This is, the one area in our formalism where capacity plays an important role.  For realisable models $L_\erm$ is always equals 0 and we do not need to evoke capacity.

From Bayes' rule $f(r|\ell) = \Prob{\ell|r}\, \rho(r)/\Prob{\ell}$ so that
\begin{align*}
  \av{R|\ell} = \frac{ \int\limits_0^1 r  \,  \Prob{\ell|r} \, \rho(r) \, \dd r}
  { \int\limits_0^1  \Prob{\ell|r} \, \rho(r) \, \dd r}
  = \frac{ \int\limits_0^1 r ^{\ell+1}\,(1-r)^{m-\ell} \, \rho(r) \, \dd r}
  { \int\limits_0^1 r ^{\ell}\,(1-r)^{m-\ell} \, \rho(r) \, \dd r}
\end{align*}
Putting together the results above we obtain
\begin{align*}
  \av{R_\erm} = \sum_{\ell=0}^m
  \frac{ \int\limits_0^1 r ^{\ell+1}\,(1-r)^{m-\ell} \, \rho(r) \, \dd r}
  { \int\limits_0^1 r ^{\ell}\,(1-r)^{m-\ell} \, \rho(r) \, \dd r}
  \left((1-F_L(\ell-1))^H - (1-F_L(\ell))^{H}\right).
\end{align*}
In the realisable case (i.e. when there exists a hypothesis that
perfectly classifies all the training examples), then $L_\erm=0$ and
\begin{align*}
  \av{R_\erm} = \av{R|\ell=0}
  =  \frac{ \int\limits_0^1 r \,(1-r)^{m} \, \rho(r) \, \dd r}
  { \int\limits_0^1 (1-r)^{m} \, \rho(r) \, \dd r}.
\end{align*}

\subsection{Classification: \texorpdfstring{\brisk{}}{Lg} Model}
\label{sec:brisk}

We can numerically compute the expected ERM risk from a knowledge of the distributions of risks, $\rho(r)$.
In this section, we consider a special form of $\rho(r)$ that allows us to compute the integrals in closed form.
That is, we take $\rho(r)$ to be beta-distributed,
\begin{align}
  \rho(r) = \mathrm{Beta}(r|a,b) = \frac{r^{a-1}\,(1-r)^{b-1}}{B(a,b)}.
\end{align}
For a balanced data set where we perform a binary classification task we would choose $b=a$, while for k-way classification $b=a/(k-1)$ so that $\av{R_h}=(k-1)/k$.
Note that this distribution is unbiased, so, for example, in the binary case, there are as many poor hypotheses as good ones.
We call this the \brisk{} model.
The parameter $a$ measures the degree of ``attunement'': the smaller $a$ the more attuned is the hypothesis class $\mathcal{H}$ to the problem being solved.
The \brisk{} model allows us to obtain an intuitive understanding of the generalisation performance in this framework.
Although this seems a very particular functional form for $\rho(r)$, we show in Appendix~\ref{sec:appAsym} that for large $m$ the expected ERM risk is dominated by the power-law growth in $\rho(r)$, so that the \brisk{} model provides a reasonably accurate approximation for many different learning scenarios.
We explicitly compare the results obtained for the perceptron using the true $\rho(r)$ and a \brisk{} model with the same asymptotic behaviour in Section~\ref{sec:realperceptron}.

For the \brisk{} model the distribution of learning errors is given by
\begin{align}
  f_L(\ell) &= \av[R]{f(\ell|R)} = \binom{m}{\ell} \frac{B(a+\ell,
  b+m-\ell)}{B(a,b+m)}.
\end{align}
The conditional probability of a risk, $r$, given an empirical loss of $\ell$ is
\begin{align}
      f(r \mid \ell) = \frac{\Prob{\ell \mid r}\, f(r)}{\Prob{\ell}} =
  \frac{r^{\ell+a-1}\, (1-r)^{m-\ell+ b-1}}{B(\ell+a, m-\ell+ b)}. 
\end{align}
from which we find
\begin{align*}
  \av{R|\ell} = \frac{a+\ell}{m+a+b}.
\end{align*}
The \brisk{} model is a \textit{realisable problem} in the limit $H\rightarrow\infty$ since $\inf_{h\in\mathcal{H}} R_h=0$. 
That is, there exists a learning machine with arbitrarily small risk. 
In this case the expected ERM risk is $\av{R_\erm}=a/(a+b+m)$. 
We note in passing that if our learning algorithm does not return a hypothesis with the lowest possible empirical risk, but rather a hypothesis with a slightly higher empirical risk then, in the case where $a\gg1$ (which is typical), the expected risk of the returned hypothesis will not be significantly different from the expected ERM risk.  
That is, as is well known in practice, it is usually not that important to find a set of parameters that is guaranteed to minimise the empirical risk.

We can use the \brisk{} model to model unrealisable problems (i.e. when all hypotheses have a non-zero risk) by considering a finite hypothesis space.
In this case, there will be some best hypothesis with non-zero risk. 
For modelling finite-sized hypotheses spaces (a common abstraction in statistical learning theory) this is perfectly meaningful.
If we assume that our hypothesis space corresponds to samples drawn from a continuous parameter space of a learning machine then a non-realisable problem would be one where $\rho(r)=0$ for all $r<R_{\smin}$.
If we sample from $\rho(r)$ then all hypotheses will have a risk greater than or equal to $R_{\smin}$. 
To get a quick intuition about the generalisation behaviour for unrealisable problems, it is useful to consider the \brisk{} model with a finite hypothesis space. Figure~\ref{fig:expectRisk} shows the expected ERM risk versus $m$ plotted on a log-log scale for the case when $a=10^{2}$ and $a=10^{3}$ with different sized hypothesis spaces.

\begin{figure}[ht]
  \centering
  \includegraphics[width=0.6\linewidth]{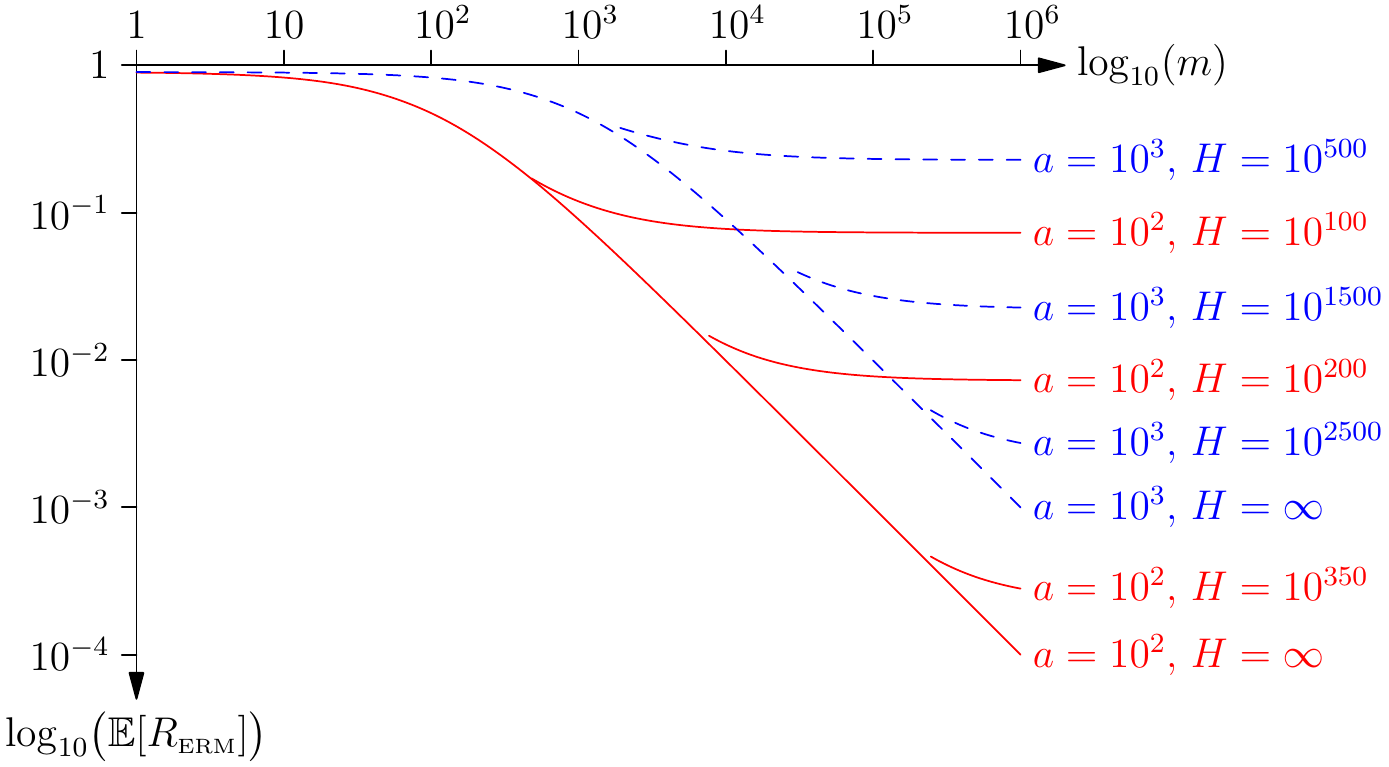}
  \caption{Expected ERM risk versus the number of training examples plotted
    on a log-log scale for $a=10^{2}$ (blue) and $a=10^{3}$ (red), with
    $b=a/9$ (i.e. for a 10 class problem) for different values of
    $H$.}
  \label{fig:expectRisk}
\end{figure}

We see in Figure~\ref{fig:expectRisk} that, for a given $a$, we can obtain better results for larger hypothesis spaces.  
This is because larger hypothesis spaces are likely to include lower risk hypotheses. 
Of course, if we use a richer, more complex
learning machine that increases the size of the hypothesis space, it is likely the machine would have worse attunement.

In standard statistical learning theory (which provides bounds on the asymptotic behaviour), there is a strong distinction made between realisable and non-realisable learning scenarios (whether or not the true concept exists in the set of hypotheses).  
In our framework, we observe that there is a zero-loss phase and a nonzero-loss phase in the expected ERM risk curves.  
For small $m$ some proportion of the learning machines are able to perfectly classify the training examples. 
If $\rho(r)$ is well approximated by a beta distribution around $\av{R_\erm}$ then $\av{R_\erm} \approx a/(a+b+m)$---this characterises the zero-loss phase.  When $\av{R_\erm}$ approaches the minimum risk $R_{\smin}$ (the risk of the best learning machine in $\mathcal{H}$) then $\av{R_\erm}$ will converge towards $R_{\text{min}}$. 
For realisable scenarios, $\av{R_\erm}$ will remain in the zero-loss phase for all $m$.

The typical bounds provided by statistical learning theory are on the number of training examples required to ensure a generalisation error of at most $\epsilon$ with a probability greater than $1-\delta$; these are known as \textit{Probably Approximately Correct} or PAC bounds \citep{pac}.
Classical PAC bounds in the realisable case depend on having a finite hypothesis space (or at least a finite capacity) as they require bounding the probability that all hypotheses with risk greater than $\epsilon$ have made at least one error on the training set with a probability of $1-\delta$. 
An analogous result in our framework would be to show that the ratio of hypotheses in $\mathcal{H}_\erm$ with risk greater than $\epsilon$ to those with risk less than $\epsilon$ is less than $\delta$. 
Technically, this is challenging to rigorously bound.   Under the assumption of an annealed approximation, we show in Appendix~\ref{sec:appPAC} that for the \brisk{} model, when $H\rightarrow\infty$ if the number of training examples satisfies \begin{align}\label{eq:pacOur}
  m \geq m^* = \frac{2\,a +2\, \ln(1/\delta)}{\epsilon} + 2\,a -2\,b +1,
\end{align}
then we will choose a machine whose risk is no greater than $\epsilon$ with a probability of at least $1-\delta$.
The annealed approximation appears to be overly conservative in which case this bound would still hold.
The equivalent bound  for a realisable learning problem from statistical learning theory \citep{pac} is
\begin{align}\label{eq:pac1}
  m \geq m^* = \frac{\ln(H) + \ln(\delta)}{\epsilon}.
\end{align}
This classical bound depends on the size of the hypothesis space. 
Our `bound' applies to hypothesis spaces of any size. 
For learning machines with continuous parameter spaces there exists a similar bound to Equation~(\ref{eq:pac1}), but with the VC-dimension playing a similar role to $\ln(H)$. 
The VC-dimension expresses the capacity of the model.
In our bound the role of $\ln(H)$ or the VC-dimension is played by the attunement parameter $a$.
This captures a quite different concept, namely how quickly does $\rho(r)$ fall off as $r\rightarrow 0$. 
If the learning machine is well attuned to the problem we would expect this to fall off relatively slowly.
Note that, whereas the capacity depends only on the learning machine architecture, the attunement also depends on the distribution of data.
We believe that the good performance of modern deep learning algorithms can be explained by their attunement. 
As pointed out in \cite{zhang2016understanding} the apparent vast VC dimension of deep learning machines renders the bound~(\ref{eq:pac1}) completely uninformative.

\section{Regression: \texorpdfstring{\gp{}}{Lg} Model}
\label{sec:regression}

To understand generalisation in the context of regression, we introduce an idealised problem setting, in which the problem is to find a function $h(\bm{x})$ to fit some true function $g(\bm{x})$ over a set of points $\mathcal{X}$. 
We take the loss function at a point $\bm{x}\in\mathcal{X}$ to be the squared error
\begin{align*}
  \ell(\bm{x}) = \epsilon_h^2(\bm{x})
  = \left( \strut h(\bm{x}) - g(\bm{x}) \right)^2.
\end{align*}
To characterise the performance of the function $h(\bm{x})$ we assume
that the set of errors $\epsilon_h^2(\bm{x})$ over $\mathcal{X}$ is
distributed according to
\begin{align*}
  f(\epsilon_h|\tau_h) = \sqrt{\frac{\tau_h}{2\,\pi}}\,
  \e{-\frac{\tau_h \, \epsilon_h^2(\bm{x})}{2}}
\end{align*}
where $\tau_h$ is a measure of the precision of the estimate $h(\bm{x})$.
Note that the risk, $R_h$, or expected loss, $R=\av{\ell(\bm{x})}$, is equal to $1/\tau_h$, so the higher the precision the better the fit.  
We will assume that the features $\bm{x}$ are high dimensional so that a typical set of training and test points will be relatively separated from each other. 
When we evaluate $h(\bm{x})$ at this set of data points, the errors, $\epsilon_h(\bm{x})$, can be treated as independent random variable drawn from $f(\epsilon_h|\tau_h)$.

We now introduce the \gp model where we assume that we have a countable set of hypotheses $\mathcal{H}$ with their precision drawn from a gamma distribution
\begin{align*}
  \tau_h \sim f_{\tau}(\tau)
  = \frac{b^a\,\tau^{a-1} \e{-b\,\tau}}{\Gamma(a)}.
\end{align*}
We note that rescaling $\tau$ corresponds to rescaling the functions $h(\bm{x})$ and $g(\bm{x})$ by a factor $\sqrt{\tau}$.  Since such a rescaling only changes the absolute size of the loss, but not the relative sizes of the loss, it makes no difference to the problem of selecting the best function.
As a consequence, we lose no generality by taking $b=a$.  
In this case the mean value of $\tau$ is 1, and the expected error over all points $\bm{x}$ and all hypotheses $h(\bm{x}) \in \mathcal{H}$ is $a/(a-1)$.  
The variance in $\tau$ is given by $1/a$.
This is a measure of attunement of the learning machine to the problem, where small $a$ indicates better attunement---there exists a higher proportion of hypotheses with high precision and consequently low risk.

We now assume that we have a training set $\mathcal{D} = \{(\bm{x}_i, y_i)| i=1,\ldots, m\}$ where $y_i=g(\bm{x}_i)$.
Scaling by half for mathematical convenience (it does not change the expected ERM risk), we define the empirical loss to be
\begin{align*}
  L_h = \frac{1}{2} \sum_{i=1}^m \epsilon_h^2(\bm{x}_i).
\end{align*}
A straightforward calculation shows that for this model the distribution of losses given the model precision of $\tau_h$ is
\begin{align*}
  f_L(L|\tau_h) = \frac{\tau_h^{\tfrac{m}{2}} \, L^{\tfrac{m}{2}-1}
  \e{-\tau_h\,L}}{\Gamma(\tfrac{m}{2})},
\end{align*} 
from which we find
\begin{align*}
  f_L(L) &= \int_0^\infty f_L(L|\tau) \, f_{\tau}(\tau)\,\dd \tau
  \\
  &= \frac{a^{a}\,L^{\tfrac{m}{2}-1}}{\Gamma(\tfrac{m}{2})\,\Gamma(a)} \int_0^\infty
    \tau^{a+\tfrac{m}{2}-1}\, \e{-(L+a)\,\tau} \dd \tau
    = \frac{1}{a\,B(a,\tfrac{m}{2})} \frac{(L/a)^{\tfrac{m}{2}-1}}{(1+L/a)^{a+m/2}}.
\end{align*}
Let $L_\erm$ be the loss with the smallest empirical loss, then in the
\gp model the distribution for $L_\erm$ is given by
\begin{align*}
  f_{L_\erm}(L) = H\,f_L(L)\, (1-F_L(L))^{H-1},
\end{align*}
where $F_L(L)$ is the cumulative probability function
\begin{align*}
  F_L(L) = \int_0^L f_L(L) \, \dd L.
\end{align*}
From which it follows that the expected ERM risk is
\begin{align*}
  \av{R_\erm} = \int_0^\infty \frac{a+L}{a+\tfrac{m}{2}-1} \, H\,f_L(L)\,
  (1-F_L(L))^{H-1} \, \dd L.
\end{align*}
We observe that for large $H$, if $m$ is sufficiently small so
that $L_\erm\approx 0$ then
\begin{align*}
  R_\erm \approx \frac{a}{a + \tfrac{m}{2} - 1}.
\end{align*}
The expected risk curve for the \gp model is shown in
Figure~\ref{fig:gammaPrecision} for $a=10^2$ and $a=10^3$ for different sizes of hypothesis space.  
We note that the curves are qualitatively remarkably similar to those for the \brisk model.

\begin{figure}[htbp!]
  \centering
  \includegraphics[width=0.65\linewidth]{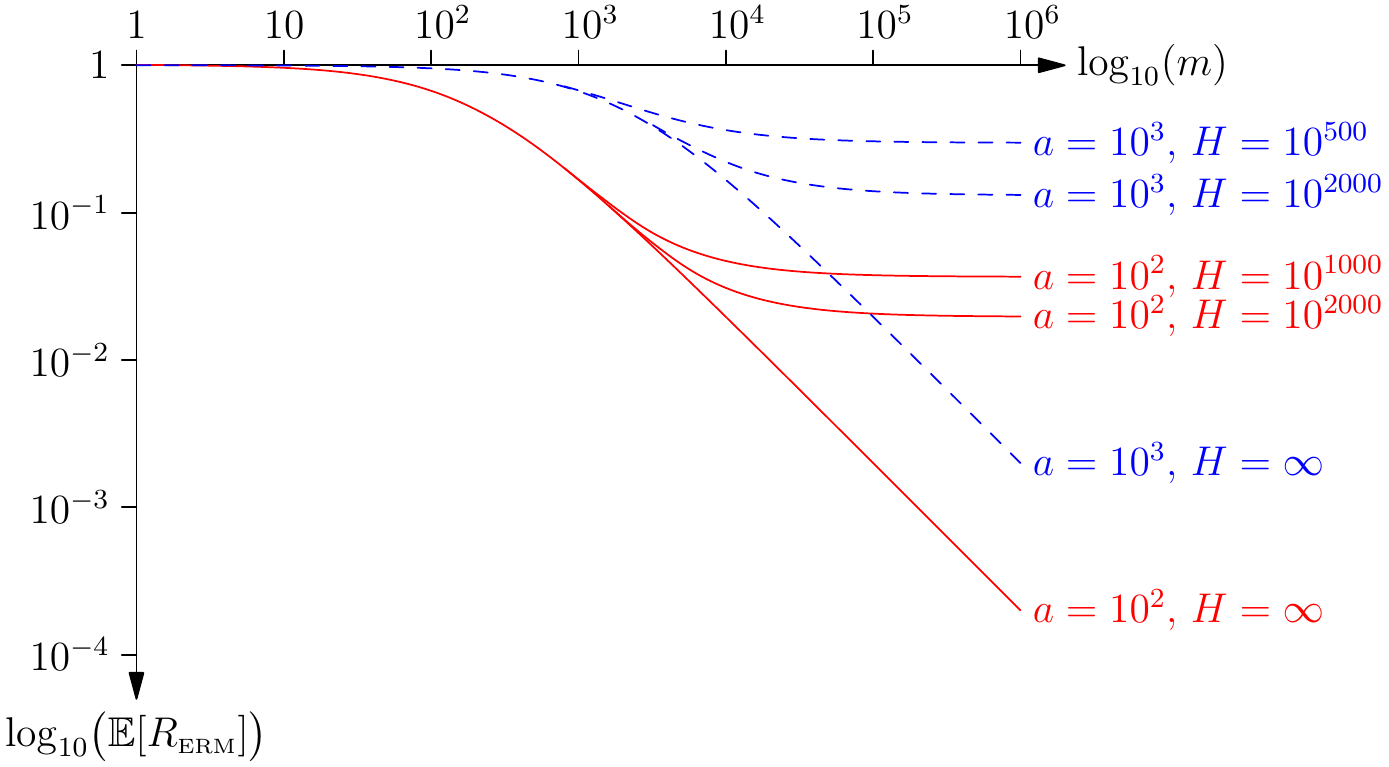}
  \caption{Expected ERM risk of the minimum loss model as a function of the
    number of training examples for $a=10^2$ and $a=10^3$ for different
    values of $H$.}
  \label{fig:gammaPrecision}
\end{figure}

The observation that good problem attunement is central to obtaining a low expected risk is consistent across the two classic machine learning problem settings studied in this paper.
In the following section, we consider cases in which capacity was traditionally invoked to explain generalisation.
We show that, in the light of our model, it is their attunement, captured by parameter $a$, that reflects prediction quality.

\section{The Distribution of Risks}
\label{sec:rho}

Key to our formalism is the need to know the distribution of risks, $\rho(r)$, for a learning problem. 
In this section, we compute $\rho(r)$ for three problems; the problem where we have a hypothesis space that includes all binary functions, a realisable perceptron and an unrealisable perceptron.

\subsection{All Binary Functions}

If the hypothesis space, $\mathcal{H}$, consists of all Boolean functions, $f:\mathcal{X}\rightarrow \{T,F\}$, where $\mathcal{X}$ is the set of all possible inputs, then the probability distribution of the risks for a randomly chosen hypothesis is given by
\begin{align}
    \rho(r) = \Prob{E=r\,N}
    = \mathrm{Binom}\!\left(E\,\Big|\, 2^{|\mathcal{X}|}, \tfrac{1}{2}\right)
    = \frac{1}{2^{2^{|\mathcal{X}|}}} \binom{2^{|\mathcal{X}|}}{E}
\end{align}
where $E$ is the number of errors made by the hypothesis.
In most machine learning applications $|\mathcal{X}|$ is exponential in the number of features.
For example, for binary strings of length $n$, $|\mathcal{X}|=2^n$. 
This distribution is very sharply concentrated around the mean $\av{R}=1/2$, having a variance of $|\mathcal{X}|/4$.  
We can approximate this distribution with a beta distribution where $a=b=|\mathcal{X}|/2$, which has the same mean and almost identical variance as the binomial distribution.\footnote{Recall for a beta distribution,
  $\mathrm{Beta}(r|a,b)$, that the mean is $a/(a+b)$ and the variance is equal to
  $a\,b/\left(\strut(a+b)^2(a+b+1)\right)$. So for $a=b$ the mean is
  $\tfrac{1}{2}$ and the variance is $1/(8a+4)$.}  
The expected ERM error for the beta distribution approximation is $|\mathcal{X}|/(2\, |\mathcal{X}|+ m)$.
We therefore require $m$ to be of order $|\mathcal{X}|$ before we obtain any generalisation performance (of course, in this problem we can only generalise on the data points we have seen).

In this case, the lack of generalisation is a result of the huge value of the attunement parameter rather than the size of the hypothesis space. 
Of course, for a binary problem, a hypothesis space consisting of all binary functions is as large as it can be. 
To achieve generalisation we require a smaller hypothesis space.
However, as we have demonstrated, we can achieve good generalisation even for hypothesis spaces large enough that we can, with high probability, find a dichotomy for a large number of training patterns.  
From the experiments of Zhang \textit{et al.}~(2016), the fact that we can learn the set of 50\,000 training images with random labels of 10 classes suggest a hypothesis space consisting of at least $10^{50\,000}$ hypotheses.  
However, this is much smaller than $2^{|\mathcal{X}|}$, which for colour images with 1K pixels taking 256 values is $2^{256^{3072}}$.  
Provided $|\mathcal{H}|$ is substantially smaller than this, we can still achieve a relatively high degree of attunement (i.e. small value of $a$).

The simple problem of learning all binary functions illustrates a case of poor attunement, which leads to no generalisation. 
Below, we study the case of a well-attuned perceptron. 
We calculate its risk probability density and relate back to our \brisk{} model to analyse changes in attunement as a result of feature reduction.

\subsection{Realisable Perceptron}
\label{sec:realperceptron}

We consider a very simple learning scenario with data set consisting of pairs $(\bm{x},y)$ where $y = \sgn(\bm{x}^\tr\bm{w}^*)$ and where $\bm{w}^*$ is a $p$-dimensional randomly chosen vector with unit norm.  That is, $y=1$ if the data is positively correlated with the target vector $\bm{w}^*$ that defines the separating plane and $y=-1$ otherwise.
We further assume that $\bm{x}$ is distributed according to a normal distribution $\mathcal{N}(\bm{x}|\bm{0},\mat{I})$.  
Our training set corresponds to $m$ pairs $(\bm{x}_i,y_i)$ where $\bm{x}_i\sim \mathcal{N}(\bm{0},\mat{I})$ and $y_i = \sgn(\bm{x}_i^\tr\bm{w}^*)$. 
We consider learning this with a perceptron with weights $\bm{w}$ such that $\|\bm{w}\|=1$.

If we consider sampling uniformly from the set of weight vectors then the distribution of weight vectors with an angle $\theta$ to $\bm{w}^*$ is \begin{align} \label{eq:weightDist}
   f_\Theta(\theta) = \frac{\sin^{p-2}(\theta)}{B(\tfrac{1}{2}, \tfrac{p-1}{2})}.
\end{align}
For this problem the risk is given by $r=\theta/\pi$ so that $\rho(r) = \pi\,f_\Theta(\pi\, r)$.
This is a realisable model for which the expected ERM risk, under the assumption of the annealed approximation, is
\begin{align*}
    \av{R_\erm} = \av{R|\ell=0}
  =  \frac{ \int\limits_0^1 r \,(1-r)^{m} \, \sin^{p-2}(\pi\,r) \, \dd r}
  { \int\limits_0^1 (1-r)^{m} \, \sin^{p-2}(\pi\,r) \, \dd r}.
\end{align*}
We can compute this numerically, however, when $m$ is large the dominant contribution to the integral comes from where $r$ is small.  
In this region $\rho(\pi\,r)$ grows as $r^{p-2}$ (since $\sin(\pi\,r)$ grows linearly with $r$ for small $r$).  
Thus we can approximate $\rho(r)$ by a beta distribution $\mathrm{Beta}(r|p-1,p-1)$ for which $\av{R_\erm} = (p-1)/(2\,p-2+m)$.

In Figure~\ref{fig:realPerc}, we show $\av{R_\erm}$ as a function of the number of training examples, $m$, for the realisable perceptron (computed numerically) and the \brisk{} model with $a=b=p-1$.  
We see that the \brisk{} model provides a good approximation to the realisable perceptron in the annealed approximation.
In Section~\ref{sec:dataset} we obtain corrections to the annealed approximation.  These corrections change the $(1-r)^m$ term rather than the $\rho(r)$ term.

\begin{figure}[htbp]
  \centering
  \includegraphics[width=0.6\linewidth]{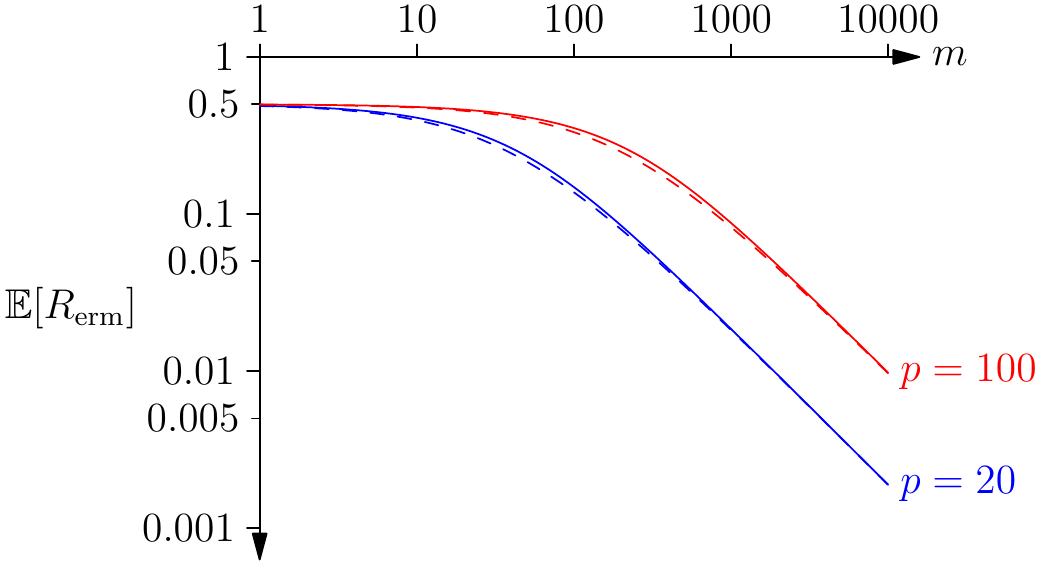}
  \caption{Expected ERM risk for the realisable perceptron for the cases
    $p=20$ and $p=100$.   We also show as dashed curves are $\av{R_\erm}
    =(p-1)/(2\,p-2+m)$ corresponding to a \brisk{} model with $a=p-1$.}
  \label{fig:realPerc}
\end{figure}

For this simple scenario, the distribution of risks (and hence the attunement) is directly determined by the dimensionality of the vector $\bm{w}^*$.  
If $\bm{w}^*$ is orthogonal to some of the features, then they can be removed, improving generalisation.  
Traditionally, this would be attributed to reducing the size of the hypothesis space.
However, we see that this also leads to an improvement in the attunement (compare the solid curves in Figure~\ref{fig:realPerc}, indicating the improvement we would expect if starting from $p=100$ features we could remove 80 features that did not affect the risk).

\subsection{Unrealisable Perceptron}

We now consider using a perceptron with a different distribution of
data.  Consider a two-class problem with data
$(\bm{x},y)$ where $y\in\{-1,1\}$ and $\bm{x}$ is
\begin{align*}
  f_X(\bm{x}|y)
  = \mathcal{N}\!\left(\bm{x}\big|\Delta\,y\,\bm{w}^*,\mat{I}\right),
\end{align*}
where $\bm{w}^*$ is some arbitrary unit norm vector.  
The parameter $\Delta$ determines the separation between the means of the two classes.
The Bayes optimal classifier corresponds to a hyperplane orthogonal to $\bm{w}^*$.
We consider learning a perceptron defined by the unit variance weight vector $\bm{w}$.  An elementary calculation shows that
\begin{align*}
    y\,\bm{x}^\tr \, \bm{w} = \Delta\,\cos(\theta) + \eta,
\end{align*}
where
$\eta = y\,\bm{w}^\tr(\bm{x}-y\,\Delta,\bm{s}^*) \sim \mathcal{N}(0,1)$
and $\cos(\theta) = \bm{w}^\tr \bm{w}^*$.  The expected risk is
\begin{align*}
  R_{\bm{w}} = \Prob{y\,\bm{x}^\tr \, \bm{w}<0}
  = \Prob{\Delta\,\cos(\theta) < -\eta} = \Phi(-\Delta\,\cos(\theta))
\end{align*}
where $\Phi(z)$ is the cumulative probability distribution for a zero mean, unit variance normally distributed random variable.  
The distribution of weight vectors at an angle $\theta$ to $\bm{w}^*$ is the same as that for the realisable perceptron (Equation~(\ref{eq:weightDist})).
The distribution of risks is given by $f_R(r) = f_\Theta(\theta(r))/\tfrac{\dd r}{\dd \theta}$, where $r = \Phi(-\Delta\,\cos(\theta))$ or
$\theta(r) = \arccos(\Phi^{-1}(r)/\Delta)$.
Noting that
\begin{align*}
  \frac{\dd r}{\dd \theta}
  = \Delta\, \sin(\theta) \, \frac{\e{-\Delta^2\,\cos^2(\theta)/2}}{\sqrt{2\,\pi}}
\end{align*}
and writing
\begin{align*}
    \sin^{p-3}(\theta) =  \left( 1-\cos^2(\theta) \right)^{\frac{p-3}{2}} = \left( 1-\left(\frac{\Phi^{-1}(r)}{\Delta}\right)^2 \right)^{\frac{p-3}{2}}
\end{align*}
we get
\begin{align*}
    \rho(r) = \frac{\sqrt{2\,\pi}}{\Delta\, B(\tfrac{1}{2},
  \tfrac{p-1}{2})} \,
  \left( 1-\left(\frac{\Phi^{-1}(r)}{\Delta}\right)^2
  \right)^{\frac{p-3}{2}}
  \, \e{(\Phi^{-1}(r))^2/2}.
\end{align*}

To help understand this equation, in Figure~\ref{fig:perceptron} we depict the probability density, $\rho(r)$, plotted against the risk, $r$, on a logarithmic scale for two different levels of class separability which correspond to (\ref{fig:perceptron}.a) $R_\smin=0.25$ and (\ref{fig:perceptron}.b) $R_\smin=0.001$.
For each of them, we look at varying the number of features.
 
\begin{figure}[htbp]
  \centering
  \includegraphics[width=0.37\linewidth]{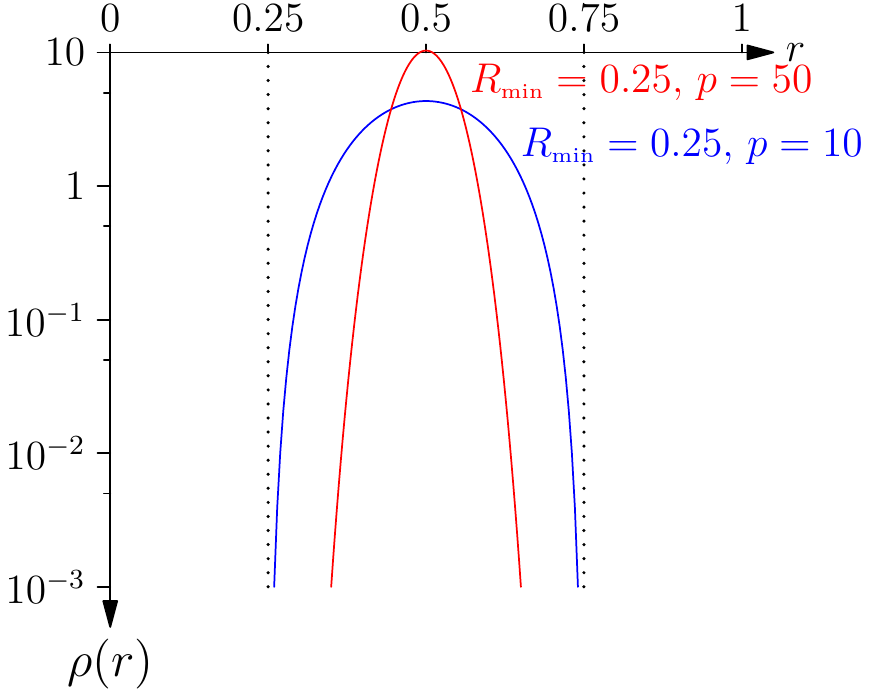} \hfil
  \includegraphics[width=0.37\linewidth]{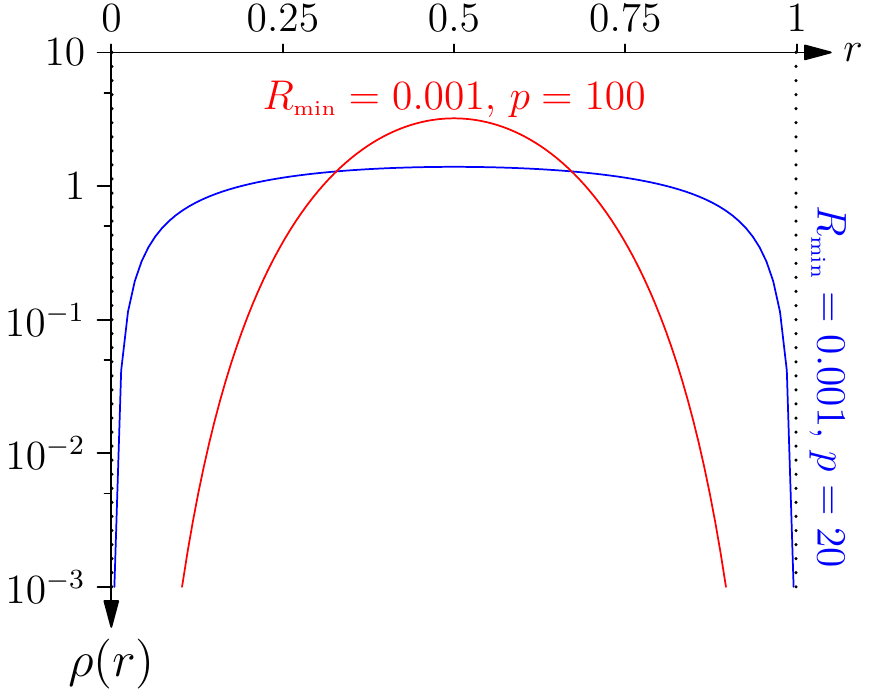}\\
  (a) \hfil \hfil (b)
  \caption{Probability density, $f_R(r)$, plotted on a logarithmic
      scale against the risk, $r$, for (a)
      $R^*=0.25$ so that $\Delta\approx-0.674$ with $p=10$ and $p=50$,
      and (b) $R=0.001$ so that
      $\Delta\approx-3.090$ with $p=20$ and $p=100$.  The vertical
      dotted lines show the maximum and minimum risks in $\mathcal{H}$.}
    \label{fig:perceptron}
\end{figure}

We note that for unrealisable models the distribution of risks, $\rho(r)$, will be 0 for $r<R_\smin$.  When $\av{R_\erm}$ is substantially greater than $R_\smin$, then the generalisation behaviour will be similar to a realisable model with the same attunement. 
As $m$ increases, $\av{R_\erm}$ will converge to $R_\smin$. 
The two quantities that characterise the asymptotic behaviour in the unrealisable case are $R_\smin$ and the power-law growth of $\rho(r)$ as we increase $r$ from $R_\smin$.

\section{Revisiting Assumptions}
\label{sec:assumptions}

In our analysis, we made some cavalier assumptions, in particular about the independence of the losses.  
We also replaced the expectation of a ratio by the ratio of expectations. 
This is clearly only an accurate approximation if the values are heavily concentrated around their mean. 
In this section, we treat these assumptions and approximations more carefully.

We assume that we have a realisable problem for which the expected risk for $h\in \mathcal{H}_\erm$ for a given training set, $\mathcal{D}$, is
\begin{align*}
  \av{R_\erm|\mathcal{D}} =
  \av[\mathcal{H}]{
  \frac{\sum\limits_{h\in\mathcal{H}} r_h \, \chi_h}
  {\sum\limits_{h'\in\mathcal{H}} \chi_{h'}} }
\end{align*}
where $\chi_h=\pred{h\in \mathcal{H}_\erm}$ is an indicator function equal to 1 if $h\in \mathcal{H}_\erm$ and 0 otherwise.  
Clearly, $\chi_h$ will depend on $\mathcal{D}$.

To model our learning machine we consider constructing a countable set of hypotheses, $\mathcal{H}$, by randomly sampling hypotheses from the learning machine (for learning machines with continuous parameters we would uniformly at random choose a set of parameters).  In the limit $H\rightarrow\infty$ we would expect that $\mathcal{H}$ will have the same generalisation properties as the true machine (we would not expect replacing a continuous learning space by a discrete set of points to change the performance of the machine; after all, we simulate continuous machines on a computer using finite precision arithmetic).
For any given set of data, we note that, since we are randomly sampling our parameter space to obtain a sequence of hypotheses, $h_1$, $h_2$, \ldots, then the distribution of the Bernoulli variables $\chi_{h}$ will be interchangeable.  
That is, if $\pi$ is a permutation of the indexes, then for any $H\in\mathbb{N}$
\begin{align*}
  \Prob{\chi_1,\chi_2,\ldots,\chi_H} =
  \Prob{\chi_{\pi(1)},\chi_{\pi(2)},\ldots,\chi_{\pi(H)}}.
\end{align*}
By de Finetti's theorem the random variables are independent and
identically distributed conditioned on
\begin{align*}
  M_0 = \av[h]{\chi_h|\mathcal{D}},
\end{align*}
where the expectation is over drawing sample hypotheses.  
$M_0$ will fluctuate between training sets.

For a given training set such that $M_0 = \av[h]{\chi_h|\mathcal{D}}$ we can treat the $\chi_h$'s as independent.  
We denote the cardinality of $\mathcal{H}_\erm$ by $S=|\mathcal{H}_\erm|$, then
\begin{align}\label{eq:RermReal}
  \av{R_\erm|\mathcal{D}} =
  \av[\bm{\chi}]{\sum_{S=1}^H \frac{1}{S}
  \sum\limits_{h\in\mathcal{H}} r_h \, \chi_h
  \pred{\sum_{h'\in\mathcal{H}} \chi_{h'} =S} }.
\end{align}
Using the integral representation for the indicator function
\begin{align*}
  \pred{\sum_{h'\in\mathcal{H}}  \chi_{h'} =S}
  = \int_0^{2\pi} \e{-\ii\,\omega \left(S-\sum\limits_{h'\in\mathcal{H}}
  \chi_{h'}\right)}
  \, \frac{\dd \omega}{2\pi}
\end{align*}
and writing
\begin{align*}
  \e{\ii\,\omega \sum\limits_{h'\in\mathcal{H}}  \chi_{h'}}
  = \prod_{h'\in\mathcal{H}} \left(\strut \chi_{h'}(\e{\ii\,\omega}-1) + 1\right)
\end{align*}
(where we used the fact that $\chi_{h'}\in\{0,1\}$), then we can rewrite
Equation~(\ref{eq:RermReal}) as
\begin{align*}
  \av{R_\erm|\mathcal{D}} =
  \av[\bm{\chi}]{\sum_{S=1}^H \frac{1}{S}
  \int_0^{2\pi} \e{-\ii\,\omega S}
  \sum_{h\in\mathcal{H}} r_h \, \chi_h
    \prod_{h'\in\mathcal{H}} \left(\strut \chi_{h'}(\e{\ii\,\omega}-1) +
  1\right)\, \frac{\dd \omega}{2\pi} }.
\end{align*}
Since $\chi_h\in\{0,1\}$ (so that $\chi_h^2=\chi_h$), then
\begin{align*}
  \av{R_\erm|\mathcal{D}} =
  \av[\bm{\chi}]{
  \sum_{S=1}^H \frac{1}{S}
  \int_0^{2\pi} \e{-\ii\,\omega S}
  \sum_{h\in\mathcal{H}} r_h \, \chi_h\,\e{\ii\,\omega}
  \prod_{\genfrac{}{}{0pt}{}{h'\in\mathcal{H}}{h'\neq h}}
  \left(\strut \chi_{h'}(\e{\ii\,\omega}-1) + 1\right)
  \, \frac{\dd \omega}{2\pi} }.
\end{align*}

We note that for our training set
$\av[\bm{\chi}]{\chi_{h'}|\mathcal{D}} = M_0$.  We define
$M_1 = \av[\bm{\chi}]{r_h\,\chi_h|\mathcal{D}}$.  Thus, since the
$\chi_h$'s are all IID distributed, then
\begin{align*}
  \av{R_\erm} &=
  H \sum_{S=1}^H \frac{1}{S}
  \int_0^{2\pi} \e{-\ii\,\omega S}
   M_1\,\e{\ii\,\omega}
  \left(\strut M_0(\e{\ii\,\omega}-1) + 1\right)^{H-1}
                \, \frac{\dd \omega}{2\pi}.
\end{align*}
Using the binomial expansion of $\left(M_0\,\e{\ii\,\omega} + (1-M_0)\right)^{H-1}$,
\begin{align*}
  \av{R_\erm|\mathcal{D}}
  &= H\, M_1 \sum_{S=1}^H \frac{1}{S} \int_0^{2\pi}
     \e{-\ii\,\omega S}\e{\ii\,\omega} \sum_{k=0}^{H-1}\binom{H-1}{k}
     M_0^k\e{\ii\,\omega k} (1-M_0)^{H-1-k}
     \, \frac{\dd \omega}{2\pi} \\
  &=H \, M_1 \sum_{S=1}^H \frac{1}{S}  
    \sum_{k=0}^{H-1} \binom{H-1}{k}
    M_0^k\int_0^{2\pi} \e{\ii\,\omega (k+1-S)} \,
    \frac{\dd \omega}{2\pi} (1-M_0)^{H-1-k}\\
  &=H \, M_1 \sum_{S=1}^H \frac{1}{S}  
    \sum_{k=0}^{H-1} \binom{H-1}{k}
    M_0^k \pred{k=S-1} (1-M_0)^{H-1-k}\\
  &= H\, M_1 \sum_{S=1}^H \frac{1}{S}  \binom{H-1}{S-1}
    M_0^{S-1} (1-M_0)^{H-S}.
\end{align*}
Given that we are dealing with a sum of IID Bernoulli variables it
should be no surprise that we end up with a binomial distribution.  This
could have probably been written down immediately from
Equation~(\ref{eq:RermReal}), but to ensure all terms are correct we
prefer a purely algebraic derivation.  We note that
\begin{align*}
  \frac{1}{S}  \binom{H-1}{S-1} = \frac{1}{H} \binom{H}{S},
\end{align*}
so that
\begin{align}\label{eq:exactRerm}
  \av{R_\erm|\mathcal{D}}
  &= \frac{M_1}{M_0} \sum_{S=1}^H  \binom{H}{S}
    M_0^{S} (1-M_0)^{H-S} \\
  &= \frac{M_1}{M_0} \left(1-(1-M_0)^{H}\right)
    \approx \frac{M_1}{M_0} \left(1-\e{-H\,M_0}\right),
\end{align}
where we used the fact that the terms in the sum are equal to $\mathrm{Binom}(S|H,M_0)$, so summing over $S$ from 0 to $H$ will give 1.
The sum, however, misses the first term which leads to the correction term  $(1-M_0)^{H}$.  $H\,M_0$ is the number of hypotheses that correctly satisfy all the training examples.  
In the limit $H\rightarrow\infty$, the term $\exp(-H\,M_0)$ is infinitesimal.  
Even for finite hypothesis spaces, this correction term will nearly always be negligibly small.  
We ignore this correction in the rest of the paper.

\section{Corrections due to Fluctuations}
\label{sec:dataset}

For any independently chosen finite data set, $\mathcal{D} = \{(\bm{x}^\mu, y^\mu) | \mu=1,\,2,\,\ldots,\,m\}$, there will be chance fluctuations between the features vectors, $\bm{x}^\mu$, that lead to variation in the generalisation performance. % depending on the dataset.
Perhaps more surprisingly, these fluctuations lead to a significant change in the mean behaviour.
In this section we derive an approximation to these corrections, which depend on the detail of the learning machine beyond the distribution of the risks, so cannot be computed in general.  We derive corrections for the realisable perceptron.
For any realisable model we have shown that
\begin{align*}
    \av{R_\erm|\mathcal{D}} = \frac{M_1}{M_0}
    = \frac{\av[r\sim\rho(r)]{r\,p(r,\mathcal{D})}}{\av[r\sim\rho(r)]{p(r,\mathcal{D})}}
\end{align*}
where $p(r,\mathcal{D})$ is the proportion of hypotheses with risk $r$ that correctly classify all training examples (i.e. $\forall\, (\bm{x},y)\in\mathcal{D}\; h(\bm{x})=y$, where $h(\bm{x})$ denotes the prediction of hypothesis $h$ given a feature vector $y$).
In Figure~\ref{fig:fluctuations} we illustrate schematically what $p(r,\mathcal{D})$ might look like for the realisable perceptron.

\begin{figure}[htbp]
  \centering
  \includegraphics[width=0.27\linewidth]{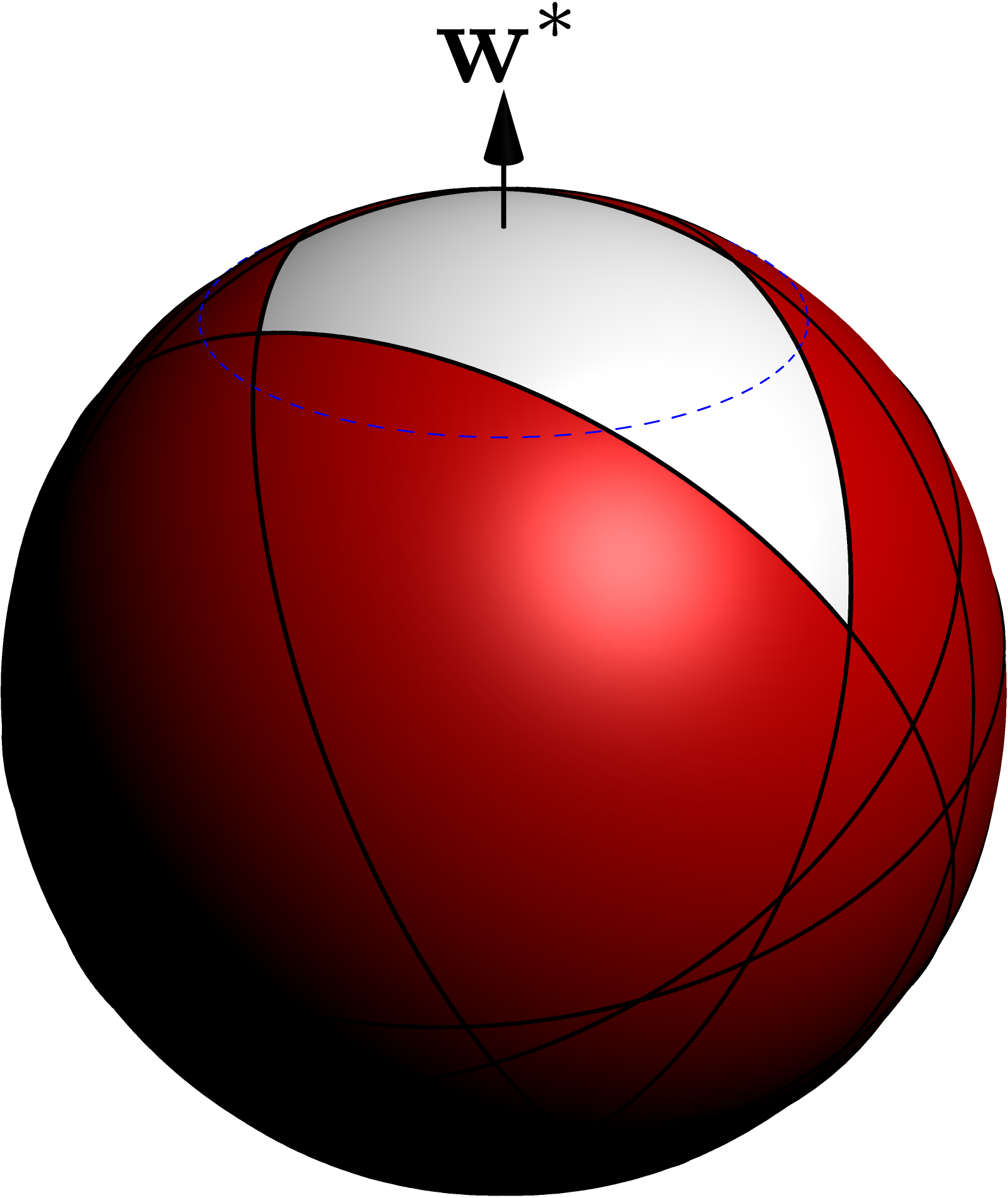} \hfil
  \includegraphics[width=0.35\linewidth]{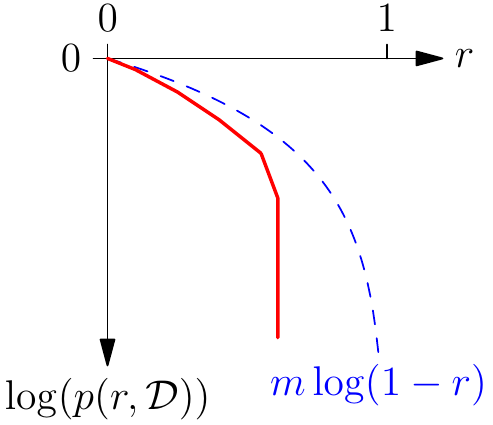}\\
  (a) \hfil \hfil (b)
  \caption{Schematic illustration of the proportion, $p(r,\mathcal{D})$
    of parameter space with risk~$r$ that correctly classifies the
    training examples for a realisable perceptron.  (a) Shows the weight
    space for a perceptron with three inputs.  The white area represents
    the parameters that correctly classify the examples.  The vector
    $\bm{w}^*$ represents the correct concept.  The proportion
    $p(r,\mathcal{D})$ would corresponds to the fraction of weight space
    at a given value of $r$ that correctly classifies the inputs.  For the perceptron, constant risk would correspond to line of constant latitude as illustrated by the dashed blue line.  
    (b) Shows an illustrate of $\log(p(r,\mathcal{D}))$ together with $\log(\av{p(r,\mathcal{D})})$ (dashed curve).}
  \label{fig:fluctuations}
\end{figure}

Denote the set of hypotheses with risk $r$ that correctly classify the first $k$ training examples by
\begin{align*}
    \mathcal{H}_r^k = \left\{ h\in\mathcal{H} \big| R_h =r \wedge \forall\, \mu=1,\,2,\,\ldots,\,k\; h(\bm{x}^\mu)=y^\mu \right\},
\end{align*}
% then $p(r,\mathcal{D})=\mathcal{H}_r^m/\mathcal{H}_r$, where $\mathcal{H}_r=\mathcal{H}_r^0$ (i.e. the set of hypotheses with risk $r$).  We note that we can also write $p(r,\mathcal{D})$ as
then $p(r,\mathcal{D})= |\mathcal{H}_r^m|/|\mathcal{H}_r^0|$, where $\mathcal{H}_r^0$ is the set of hypotheses with risk $r$.  
We note that we can also write $p(r,\mathcal{D})$ as
\begin{align*}
p(r,\mathcal{D}) = \prod_{k=1}^m \frac{|\mathcal{H}_r^{k}|}{|\mathcal{H}_r^{k-1}|}
= \frac{|\mathcal{H}_r^{m}|}{|\mathcal{H}_r^{m-1}|}
\frac{|\mathcal{H}_r^{m-1}|}{|\mathcal{H}_r^{m-2}|} \ldots
\frac{|\mathcal{H}_r^{1}|}{|\mathcal{H}_r^{0}|}
= \frac{|\mathcal{H}_r^{m}|}{|\mathcal{H}_r^{0}|}.
\end{align*}
Defining $p_r^k = |\mathcal{H}_r^{k}| / |\mathcal{H}_r^{k-1}|$ then 
\begin{align*}
    p(r,\mathcal{D}) = \prod_{k=1}^m p_r^k.
\end{align*}

The quantity $p_r^k$ is the proportion of hypotheses in $\mathcal{H}_r^{k-1}$ that correctly classify the $k^{th}$ data point.
By the definition of risk, $\av{p_r^k}=1-r$.  However, due to chance correlations between training examples, $p_r^k$, will fluctuate.  As the training examples are drawn independently, $p_r^k$ and $p_r^j$ will be independent random variables when $k\neq j$.  Now
\begin{align*}
    \logg{p(r,\mathcal{D})} = \sum_{k=1}^m \logg{p_r^k}
\end{align*}
is a sum of independent random variables and by the central limit theorem this sum will converge towards a normal distribution ($m$ is usually sufficiently large that the distribution of $\logg{p(r,\mathcal{D})}$ will be very closely approximated by a normal distribution). In consequence, $p(r,\mathcal{D})$ will be close to a log-normal distribution and its median value will typically be smaller than its mean. The typical value of $\av{R_\erm|\mathcal{D}}$ is going be given when $p(r,\mathcal{D})$ takes its most likely value or equivalently by the median of $\logg{p(r,\mathcal{D})}$ .  Since $\logg{p(r,\mathcal{D})}$ is normally distributed, its median, mode and mean are all the same.  Thus to compute the typical value of $\av{R_\erm|\mathcal{D}}$ we can use the most likely value of $p(r,\mathcal{D})$ which will be
\begin{align*}
    \hat{p}(r,\mathcal{D}) = \exp\!\left( \av{\logg{p(r,\mathcal{D})}} \right)
\end{align*}
where
\begin{align*}
    \av{\logg{p(r,\mathcal{D})}} = \sum_{k=1}^m \av{\logg{p_r^k}}
    = \sum_{k=1}^m \av{\frac{|\mathcal{H}_r^{k}|}{|\mathcal{H}_r^{k-1}|}}.
\end{align*}
By Jensen's inequality $\av{\logg{p(r,\mathcal{D})}} \leq \logg{\av{p(r,\mathcal{D})}}$.  This does not tell us whether the fluctuations improve or worsen the generalisation performance (which depends on the gradient of $\logg{p(r,\mathcal{D})}$). However, for $r=0$ we know that $p(r,\mathcal{D})=1$ so that the fluctuations can only increase the gradient of $\av{\logg{p(r,\mathcal{D})}}$ around $r=0$.  As this gradient determines the asymptotic generalisation performance (what we have termed the attunement) we see that the `annealed approximation' will be an upper bound on the asymptotic generalisation performance (i.e. it will be overly conservative).

To get an understanding of the quantitative corrections we need to model the fluctations that we are likely to get in $p_r^k$.  As $p_r^k$ is a random variable that lies in the range from 0 to 1 it is reasonable to approximate its distribution by a beta distribution.  That is $p_r^k \sim \mathrm{Beta}(A_r^k,B_r^k)$ (this distribution is not related to that used in the \brisk{} model---we use a beta distribution as in both cases we are modelling a random variable that lies in the range 0 to 1).  If $p_r^k$ is beta distributed then
\begin{align*}
    \av{\logg{p(r,\mathcal{D})}} = \sum_{k=1}^m \left( \strut \psi(A_r^k)
        - \psi(A_r^k + B_r^k) \right)
\end{align*}
Using the fact that $\av{p_r^k}=1-r$ and denoting the variance of $p_r^k$ by $v_r^k$ then we find
\begin{align*}
    A_r^k &= \frac{1-r}{\Delta_r^k}, & B_r^k &= \frac{r}{\Delta_r^k}, &
    \Delta_r^k = \frac{v_r^k}{r(1-r)-v_r^k},
\end{align*}
By definition
\begin{align*}
    v_r^k = \av{(p_r^k)^2} - (1-r)^2
\end{align*}
where
\begin{align*}
    \av{(p_r^k)^2} &= \av{ \frac{|\mathcal{H}_r^{k}|^2}{|\mathcal{H}_r^{k-1}|^2} }
    \\
    &= \av{ \frac{1}{|\mathcal{H}_r^{k-1}|^2} 
    \sum_{h\in \mathcal{H}_r^{k-1}} \sum_{h'\in \mathcal{H}_r^{k-1}} \av[(\bm{x}^k,y^k)]{\pred{h(\bm{x}^k)=y^k}\,\pred{h'(\bm{x}^k)=y^k}} }.
\end{align*}
We observe that the fluctuations depend on the expected correlation between hypotheses of a given risk.  Denoting the joint probability of a pair of hypotheses making a particular prediction for a randomly sampled data-point, $(\bm{x},y)$, by
\begin{align*}
    \Prob{\pred{h(\bm{x})=y}=w,\, \pred{h'(\bm{x})=y}=z} = p_{wz}(h,h')
\end{align*}
(with $w,z\in\{0,1\}$) then
\begin{align*}
    \av{(p_r^k)^2} &= \frac{1}{|\mathcal{H}_r^{k-1}|^2} \sum_{h\in \mathcal{H}_r^{k-1}} \sum_{h'\in \mathcal{H}_r^{k-1}} p_{11}(h,h').
\end{align*}
We note that $p_{11}(h,h') + p_{01}(h,h') = \Prob{h(\bm{x})=y} = 1-r$.  Also for randomly selected hypotheses by symmetry $p_{10}(h,h') = p_{01}(h,h')$, while for any pair of hypotheses $p_{10}(h,h') + p_{0,1}(h,h')=\Prob{h(\bm{x})\neq h'(\bm{x})}$.  From this we find $p_{11}(h,h') = 1-r - \Prob{h(\bm{x})\neq h'(\bm{x})}/2$ and the variance in $p(r,\mathcal{D})$ is given by
\begin{align*}
    v_r^k = r(1-r) - \frac{1}{2\,|\mathcal{H}_r^{k-1}|^2} \sum_{h\in \mathcal{H}_r^{k-1}} \sum_{h'\in \mathcal{H}_r^{k-1}} \Prob{h(\bm{x})\neq h'(\bm{x})}.
\end{align*}

Up to now the only information we have required about the problem was the distribution of risks.  However, to compute $\Prob{h(\bm{x})\neq h'(\bm{x})}$ we need to know more about the learning algorithm.  We consider the realisable perceptron where
\begin{align*}
    \Prob{h(\bm{x})\neq h'(\bm{x})} = \frac{\theta_{hh'}}{2}
\end{align*}
where $\theta_{hh'} = \arccos(\bm{w}_h^\tr\bm{w}_{h'})$ is the angle between the weight vectors corresponding to hypotheses $h$ and $h'$.  For any hypothesis, $h$, with risk $r$, the weight vector can be written
\begin{align*}
    \bm{w}_h = \bm{w}^*\,\cos(\pi\,r) + \bm{w}_h^\perp \,\sin(\pi\,r)
\end{align*}
where $\bm{w}^*$ is a unit vector in the direction of the perfect perceptron and $\bm{w}_h^\perp$ is some unit orthogonal to $\bm{w}^*$.  For two hypotheses with risk $r$
\begin{align*}
    \theta_{hh'} = \cos^2(\pi\,r) + \bm{w}_h^{\perp\tr} \bm{w}_{h'}^\perp \,\sin^2(\pi\,r).
\end{align*}
If there is a large number of features $\bm{w}_h^{\perp\tr} \bm{w}_{h'}^\perp\approx0$ for the vast majority of hypothesis pairs so that $\theta_{hh'} \approx \cos^2(\pi\,r)$.  Ignoring other correlations
\begin{align*}
    v_r^k = r(1-r) - \frac{1}{2\,\pi} \arccos\!\left(\cos^2(\pi\,r)\right)
\end{align*}
and
\begin{align*}
    \Delta_r^k = \frac{2\,\pi\,r\,(1-r)}{\arccos\!\left(\cos^2(\pi\,r)\right)} -1.
\end{align*}

In Figure~\ref{fig:fullApprox} we show the expected ERM risk versus $m/p$ (recall $p$ is the number of features in the perceptron) in the limit when $p\rightarrow\infty$. 
For comparison the annealed approximation is also shown in Figure~\ref{fig:fullApprox}.
Finally, we also show the Gardner solution, which is only defined in this limit \citep{Gardner_1988,engel2001statistical}.  As we can see, our approximation is very close to the Gardner solution in the limit when $m/p$ becomes large.  There are discrepancies for smaller values of $m/p$ due to ignoring other fluctuations.  There will be fluctuations because pairs of training examples $\bm{x}^\mu$ and $\bm{x}^\nu$ will typically have small chance correlations.  These are of order $1/p$, but because there are $m-1$ other training examples, the fluctuations will grow.

\begin{figure}[htbp]
    \centering
    \includegraphics[width=10cm]{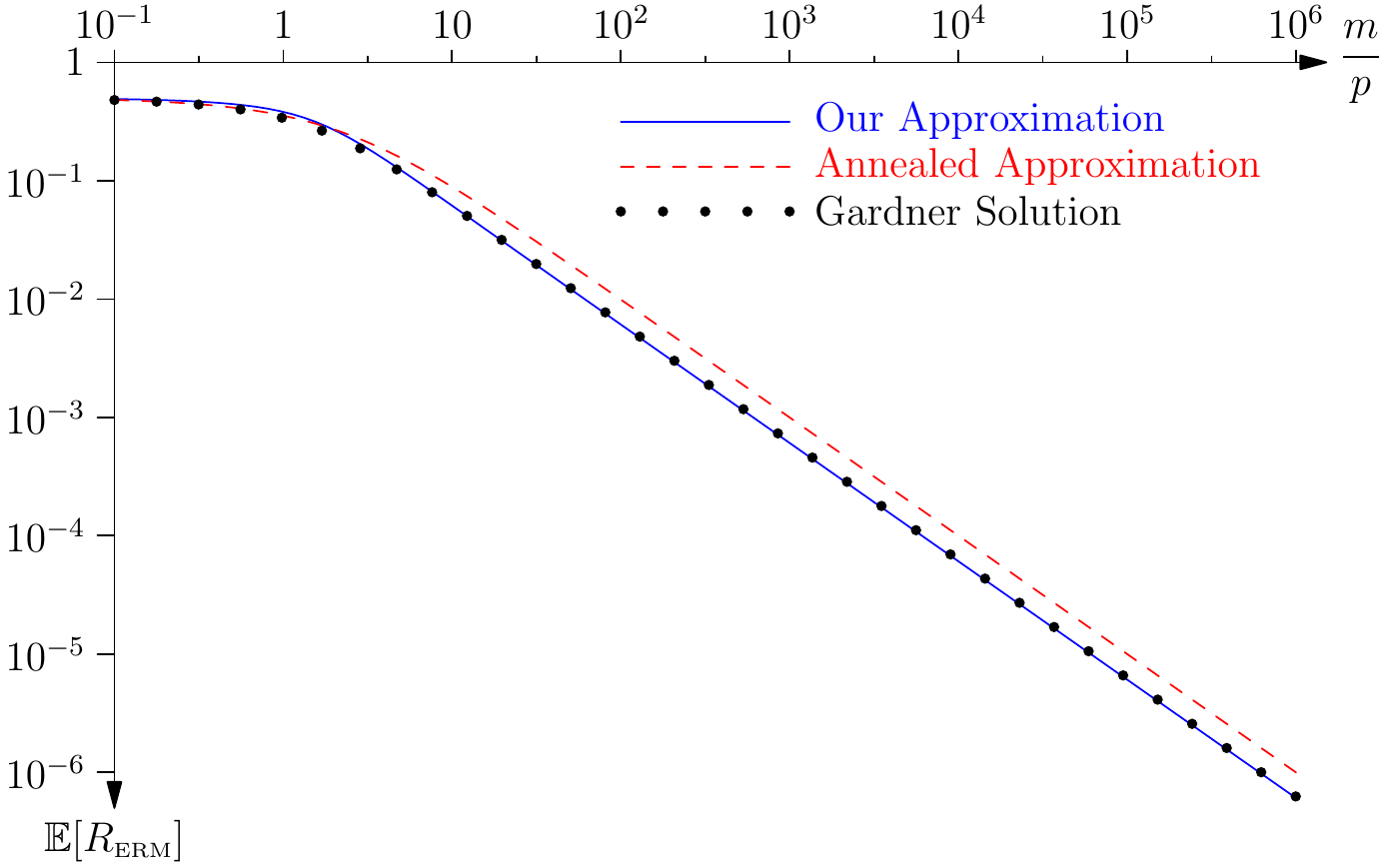}
    \caption{Plot of the expected ERM error versus the ratio $m/p$ for the realisable perceptron in the limit $p\rightarrow\infty$ plotted on a log-log scale.  The solid blue line shows the approximation developed in this paper.  The black dotted line shows the Gardner solution while the red dashed line shows the annealed solution.}
    \label{fig:fullApprox}
\end{figure}

Although the Gardner solution strictly requires us to take the limit $p\rightarrow\infty$ it has been shown that it provides a reasonable approximation to Gibb's learning for perceptrons with a smaller number of features (see Figure~1.4 in \cite{engel2001statistical}).  Gibb's learning for the perceptron can be well approximated by the perceptron learning algorithm with some added noise to ensure that different parts of $H_\erm$ are explored \cite[Section 3.2]{engel2001statistical}.
The Gardner approach has been used to examine other learning rules, noisy training set, etc. (see \cite{engel2001statistical} for a review of the literature).  It has also been extended to SVMs, see, for example, \cite{SVM2001}.  These calculations are very technical and model specific.  In this paper, we have proposed understanding generalisation behaviour more generally through considering $\rho(r)$. However, to obtain results applicable to any learning machine we are forced to use the annealed approximation.

\section{Conclusions}
\label{sec:conclusions}

Traditional machine learning theory has universal applicability in that it provides bounds on the generalisation gap that depend only on the capacity of the learning machine and is independent of the problem being tackled.
This apparent strength is also its weakness.
A learning machine with large capacity may or may not generalise well depending on the distribution of data.
We know there exist distributions of data for which we cannot get any tighter bounds, so to obtain tighter ERM risk bounds requires us to include information about the data distribution.
We have done this by considering the distribution of risks that depends both on the learning machine and on the problem (i.e. the distribution of data).
The cost is that we lose a lot of the elegance of traditional machine learning.
Instead of hard bounds, we are left with approximate results for the expected ERM risk.  There are, however, advantages:  we know that a poorly attuned problem will require a large number of training examples and we have a model for the generalisation performance rather than just the generalisation gap.
We can improve on the annealed approximation, but this requires additional information about the learning machine.
However, the annealed approximation provides a qualitatively accurate model that captures many of the generalisation properties of the exact system.
Most importantly in our view is that generalisation is heavily determined by the attunement. %This correction term does not change the important role of attunement.

The attunement measures the power-law behaviour of $\rho(r)$ around $r=0$.
We show in Appendix~\ref{sec:appAsym} that this will determine the asymptotic learning behaviour (at least for realisable models).
We believe this provides a new language for understanding generalisation performance.
As an illustration of this, consider the case of a perceptron where the input data distribution is confined to a low-dimension subspace of dimension $p'$, which may be much less than the number of features, $p$.
The weights of the perceptron orthogonal to this subspace would not change the output of the perceptron.
As a consequence, the attunement would be $p'-2$---that is, it depends on the dimensionality of the subspace.
In contrast, the capacity of the perceptron is $p+1$---that is, it depends on the number of features.
The capacity is oblivious to the distribution of data!
This provides a simple, but stark example of where attunement can capture features of the learning problem that determine the generalisation performance that capacity fails to do. 
To design a successful learning machine it is not necessary to limit the capacity but to obtain good attunement (i.e. ensure that there is a relatively high proportion of low-risk machines).
We believe this shift in thinking will aid the design of learning machines in the future.

\appendix

\vspace*{1cm}

{\noindent\huge \textbf{Appendices}}

\section{Probably Approximately Correct Bound}
\label{sec:appPAC}

To make a connection to statistical learning theory it is interesting to obtain a PAC bound showing that for sufficiently large $m$ our learning algorithm will return a model with risk no greater than $\epsilon$ with a probability of at least $1-\delta$.
We can compute a bound assuming that annealed approximation is correct.  That is, $p(r,\mathcal{D})=(1-r)^m$.
At least for the realisable perceptron, this is a conservative bound (the drop off is faster than this which will lead to a tighter bound).
We conjecture that the bound for the annealed approximation is generally over pessimistic, but we do not have a proof of this.
Obviously the bound will depend on the distribution of risks.  We consider here the \brisk{} model, i.e. $\rho(r) = \mathrm{Beta}(r|a,a)$.

We prove the following lemma
\begin{lemma}\label{lemma:pac}
  We consider a learning problem where the distribution of risks is beta-distributed
  $\rho(r) = \mathrm{Beta}(r|a,a)$ and where
  $p(r,\mathcal{D})=(1-r)^{m}$.  For any $\epsilon>0$ and
  $\delta>0$, if $m$, satisfy
\begin{align}
    m \geq \frac{2\,(a + \ln(1/\delta))}{\epsilon} + 2\,a -b +1 
\end{align}
then $\Prob{R_\erm<\epsilon} \geq 1-\delta$.
\end{lemma}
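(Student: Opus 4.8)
The plan is to reduce the claim to a one-dimensional tail estimate for a single Beta random variable, and then apply a Chernoff/moment-type bound.

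\emph{Step 1: identify the law of $R_\erm$.} Under the annealed approximation $p(r,\mathcal{D})=(1-r)^m$, and sampling uniformly from $\mathcal{H}_\erm$ in the realisable limit $H\to\infty$, the risk of the returned hypothesis has density proportional to $\rho(r)\,(1-r)^m$. With $\rho(r)=\mathrm{Beta}(r\mid a,a)$ this is proportional to $r^{a-1}(1-r)^{m+a-1}$, i.e.\ $R_\erm\sim\mathrm{Beta}(a,m+a)$ --- precisely the conditional law $f(r\mid\ell=0)$ already written down for the \brisk{} model (with $b=a$). Hence
\begin{align*}
  \Prob{R_\erm\geq\epsilon}
  = \frac{\int_\epsilon^1 r^{a-1}(1-r)^{m+a-1}\,\dd r}{B(a,m+a)},
\end{align*}
and it suffices to show the right-hand side is at most $\delta$ once $m\geq m^*$, since then $\Prob{R_\erm<\epsilon}=1-\Prob{R_\erm\geq\epsilon}\geq 1-\delta$. (One could equally phrase this, as the surrounding text does, as bounding the ratio of the $[\epsilon,1]$ integral to the $[0,\epsilon]$ integral by $\delta/(1+\delta)$; it comes to the same thing.)

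\emph{Step 2: tail bound.} For any $t\geq 0$ the elementary inequality $(r/\epsilon)^t\geq\pred{r\geq\epsilon}$ gives $\int_\epsilon^1 r^{a-1}(1-r)^{m+a-1}\,\dd r \leq \epsilon^{-t}\,B(a+t,m+a)$, so
\begin{align*}
  \Prob{R_\erm\geq\epsilon}
  \leq \epsilon^{-t}\,\frac{B(a+t,m+a)}{B(a,m+a)}
  = \epsilon^{-t}\,\frac{\Gamma(a+t)}{\Gamma(a)}\,\frac{\Gamma(m+2a)}{\Gamma(m+2a+t)}.
\end{align*}
Log-convexity of $\Gamma$ (equivalently monotonicity of $\psi$ together with $\ln(x-1)<\psi(x)<\ln x$) yields $\Gamma(a+t)/\Gamma(a)\leq(a+t)^t$ and $\Gamma(m+2a+t)/\Gamma(m+2a)\geq(m+2a-1)^t$, whence
\begin{align*}
  \Prob{R_\erm\geq\epsilon}\leq\left(\frac{a+t}{\epsilon\,(m+2a-1)}\right)^{t}.
\end{align*}
Now choose $t$ of order $a+\ln(1/\delta)$. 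If $m$ is large enough that $\epsilon(m+2a-1)\geq e\,(a+t)$, the bracket is at most $e^{-1}$, so the bound is at most $e^{-t}\leq\delta$ as soon as $t\geq\ln(1/\delta)$. Unwinding the sufficient condition $\epsilon(m+2a-1)\geq e(a+t)$ and using $-\ln(1-\epsilon)\geq\epsilon$ gives a requirement of the form $m\geq c_1(a+\ln(1/\delta))/\epsilon+c_2\,a-b+c_3$; recovering the stated constants (the factor $2$, the additive $2a-b+1$) is a matter of trading a slightly larger multiplicative margin in the bracket against a slightly larger exponent $t$. (Alternatively, when $a$ is a positive integer one can bypass the $\Gamma$-estimates entirely: the identity $\Prob{\mathrm{Beta}(a,m+a)\geq\epsilon}=\Prob{\mathrm{Binom}(m+2a-1,\epsilon)<a}$ turns Step 2 into a lower-tail Chernoff bound for a binomial.)

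\emph{Main obstacle.} The analytic content --- that the upper tail of $\mathrm{Beta}(a,m+a)$ decays exponentially once $m\epsilon$ exceeds a constant multiple of $a$ --- is routine. The delicate part is purely quantitative: choosing the free exponent $t$ and the elementary $\Gamma$-ratio estimates so that the algebra collapses to \emph{exactly} the asserted $m^*$, and verifying that this regime is non-vacuous, i.e.\ that $m\geq m^*$ indeed forces $\epsilon$ to lie above the mean $a/(m+2a)$ of the relevant Beta law (equivalently $m^*\geq 0$), so that the Chernoff step is not applied outside its range of validity.
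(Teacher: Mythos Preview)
Your Step~1 matches the paper exactly. Step~2, however, takes a genuinely different route. The paper applies the exponential Chernoff bound $\Prob{R_\erm\geq\epsilon}\leq e^{-\lambda\epsilon}\,\av{e^{\lambda R_\erm}}$, upper-bounds the moment generating function by replacing $(1-r)^{b+m-1}\leq e^{-r(b+m-1)}$ and extending the integral to $[0,\infty)$ (giving a Gamma integral), optimises over $\lambda$, and then applies the elementary inequality $x-\ln x>x/2$ to extract the leading factor~$2$. Your polynomial-moment bound $\Prob{R_\erm\geq\epsilon}\leq\epsilon^{-t}\,\av{R_\erm^{\,t}}$ is arguably cleaner in that it stays inside the Beta-function world and avoids the detour through Gamma integrals; the Beta--Binomial identity you mention for integer $a$ is a nice bonus.

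The gap is quantitative, and it is not merely cosmetic. You acknowledge that ``recovering the stated constants is a matter of trading,'' but do not carry this out, and the version you actually write down does not deliver the factor~$2$. With $t=\ln(1/\delta)$ and the bracket forced below $e^{-1}$, the requirement becomes $\epsilon(m+2a-1)\geq e\,(a+\ln(1/\delta))$, i.e.\ leading constant $e\approx 2.718$. Optimising $t$ in $((a+t)/M)^t$ does not fix this: the optimal choice still gives an asymptotic constant of $e$ in front of $\ln(1/\delta)$, not $2$. The paper's factor~$2$ comes specifically from the exponential-moment route together with $x-\ln x>x/2$; there is no analogous one-line manoeuvre in the polynomial-moment framework, with the $\Gamma$-ratio estimates you use, that collapses to the same constant. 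So your argument establishes the lemma with \emph{some} absolute constants, but not the stated $m^*$. (The aside about $-\ln(1-\epsilon)\geq\epsilon$ also plays no visible role in your bound.)
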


\begin{proof}
For the \brisk{} model with an infinite hypothesis space
\begin{align}
  \Prob{ R_\erm \geq \epsilon}
  = \int_{\epsilon}^{1} \frac{r^{a-1}\, (1-r)^{b+m-1}}{B(a,
  b+m)} \dd r.
\end{align}
Although this is an incomplete beta function, unfortunately it is too
complicated to directly prove the PAC bound by substitution.  Instead we
use the standard Chernoff construction to obtain a tail bound for the
beta distribution.  As the exponential function is strictly increasing,
we have $\forall \lambda > 0$
\begin{align}
     \Prob{ R_\erm \geq \epsilon } = \Prob{ \e{\lambda\, R_\erm} \geq \e{\lambda\, \epsilon} }.
\end{align}
It follows from Markov's inequality that
\begin{align}
     \Prob{ R_\erm \geq \epsilon } \leq \frac{\av{\e{\lambda
  \,R_\erm}}}{\e{\lambda\, \epsilon}} = \e{-\left(\lambda
  \,\epsilon-\logg{ \av{\e{\lambda\, R_\erm}}}\right)} = \e{- \psi(\epsilon)}
\end{align}
where
\begin{align} \label{psi}
    \psi(\epsilon) = \lambda\, \epsilon - \logg{\av{\e{\lambda\, R_\erm}}}.
\end{align}
This is true for and $\lambda>0$.

Calculating $\logg{\av{\e{\lambda\, R_\erm}}}$ for a beta distribution
is difficult.  Instead we find an upper bound.  Using $1-r \leq \e{-r}$
and extending the range of the integral (since the exponent is positive
this upper bounds the original integral)
\begin{align}
  \av{\e{\lambda\, R_{\erm}}}
  & = \int_{0}^{1} \frac{\e{\lambda\, r}\, r^{a-1}\,
    (1-r)^{b+m-1}}{B(a,b+m)}\, \dd r
  \\
  & \leq  \int_{0}^{\infty} \frac{r^{a-1}\,
    \e{-r\,(b+m-1-\lambda)}}{B(a, b+m)}\,\dd r
  \\
  &= \frac{\Gamma(a)}
    {B(a, b+m)\,(b+m-1-\lambda)^a}.
\end{align}
Using $B(a, b+m) = \Gamma(a)\,\Gamma(b+m)/\Gamma(a+b+m)$ and
\begin{align}
    \frac{\Gamma(a+b+m)}{\Gamma(b+m)} \leq (a + b + m - 1)^a,
\end{align}
it follows that
\begin{align}
  \av{\e{\lambda R_\erm}}
  &\leq \left( \frac{a+b+m-1}{b+m-1-\lambda} \right)^a.
\end{align}
We can now substitute the above into Equation~(\ref{psi}) to get
\begin{align}
  \psi(\epsilon)
  &\geq \max_{\lambda}{\lambda\, \epsilon - a\, \logg{a+b+m-1} +
    a\, \logg{b+m-1-\lambda}}
  \\
  & = \epsilon\,(b + m - 1) - a - a \,
    \logg{\frac{\epsilon\,(a+b+m-1)}{a}}
 \\ 
  & = \epsilon \,(b+m-1) - a -
    a\,\logg{\frac{\epsilon\,(b+m-1)}{a}} - a\logg{1 +
    \frac{a}{b+m-1}}
  \\
  & \geq \epsilon \,(b+m-1) -a -
    a\,\logg{\frac{\epsilon\,(b+m-1)}{a}} -
    a\,\frac{a}{b+m-1}  \label{ineq:log}
  \\
  & \geq \epsilon \, (b+m-1) -a - a\,
    \logg{\frac{\epsilon\,(b+m-1)}{a}} - a\,\epsilon 
    \label{ineq:assump}
 \\
  & = a \left(\frac{\epsilon \, (b+m-1)}{a} -
    \logg{\frac{\epsilon\,(b+m-1)}{a}} -1 - \epsilon\right),
\end{align}
where inequality (\ref{ineq:log}) comes from $-\logg{1+x} \geq -x$,
while (\ref{ineq:assump}) follows as by the assumption made in theorem
$b+m-1 \geq a/\epsilon$.  We now use the fact that $x-\logg{x}>x/2$ for
all $x>0$, thus (using $x=\epsilon \, (b+m-1)/a$)
\begin{align}
    \psi(\epsilon) &> \frac{\epsilon \, (b+m-1)}{2} - a - a\,\epsilon.
\end{align}

For any $m\geq m^* = 2\,(a+\,\logg{1/\delta}+a\,\epsilon)/\epsilon-b+1$ (or $\epsilon (b+m^*-1)/2 \geq a + a\,\epsilon + \logg{1/\delta}$) we have that
\begin{align}
    \psi(\epsilon) > \logg{1/\delta}.
\end{align}
Thus,
\begin{align*}
    \Prob{R_\erm >\epsilon} \leq \e{-\psi(\epsilon)} < \delta.
\end{align*}

\end{proof}

\section{Asymptotic Generalisation Performance}
\label{sec:appAsym}

Consider the case of a realisable problem with an infinite hypothesis space such that a randomly chosen hypothesis has a risk, $R_h$, distributed according to
\begin{align*}
  \rho(r) = r^{a-1} \sum_{i=0}^\infty c_i \, r^i.
\end{align*}
In this scenario a hypothesis $h\in\mathcal{H}_\erm$ will be
distributed according to
\begin{align*}
  f_{R_\erm}(r) = \frac{(1-r)^{m} \, \rho(r)}{\displaystyle \int_0^1
  (1-r')^{m} \, \rho(r')\,\dd r'}.
\end{align*}
The expected generalisation performance is thus given by
\begin{align*}
  \av{R_\erm} &= \frac{\sum\limits_{i=0}^\infty c_i B(a+1+i,m+1)}
  {\sum\limits_{i=0}^\infty c_i B(a+i,m+1)}
  = \frac{c_0\,B(a+1,m+1) + c_1\,B(a+2,m+1) + \cdots}{c_0\,B(a+1,m+1) + c_1\,B(a+2,m+1)+\cdots} \\
  &= \frac{\frac{B(a+1,m+1)}{B(a,m+1)} + \frac{c_1\,B(a+2,m+1)}{c_0\,B(a,m+1)}+\cdots}{1 + \frac{c_1\,B(a+1,m+1)}{c_0\,B(a,m+1)} + \cdots}
  = \frac{a}{m} + O\!\left(\frac{1}{m^2}\right).
\end{align*}
where we have used
\begin{align*}
\begin{split}
    \frac{B(a+i,m+1)}{B(a,m+1)} 
    & = \frac{\Gamma(a+i)\, \Gamma(a+m+1)}{\Gamma(a)\, \Gamma(a+i+m+1)}\\
    & = \frac{a\,(a+1)\,\cdots\,(a+i-1)}{(a+m+1)\,(a+m+2)\,\cdots\,(a+m+i)}.
\end{split}
\end{align*}
Thus, the generalisation error in the limit of large $m$ depends only on the exponents describing the polynomial growth in the distribution of risk.
This exponent provides a measure for the attunement of our learning machine to the problem being studied.

\vskip 0.2in
\bibliography{References}
\end{document}